\documentclass[letterpaper,10pt,conference]{ieeeconf}

\IEEEoverridecommandlockouts
\usepackage{amsmath}
\usepackage{amssymb}
\usepackage{booktabs}
\usepackage{cite}
\usepackage{graphicx}
\usepackage{placeins}
\usepackage{url}
\usepackage{hyperref}

\newtheorem{proposition}{Proposition}

\title{\LARGE \bf
BladeMaster: Real-Time Robotic Cutting Simulation with Online-Generated Persistent Discontinuities
}

\author{
Zhanyu Yang$^{1,2}$,
Yunuo Chen$^{2,3}$,
Yanjia Huang$^{3}$,
Joseph Masterjohn$^{4}$,
Yin Yang$^{5}$,
Chenfanfu Jiang$^{2}$
\thanks{$^{1}$Purdue University, USA.}%
\thanks{$^{2}$UCLA, USA.}%
\thanks{$^{3}$Envora, USA.}%
\thanks{$^{4}$Toyota Research Institute, USA.}%
\thanks{$^{5}$University of Utah, USA.}%
}

\begin{document}

\bstctlcite{IEEEtranBSTcontrol}

\maketitle
\thispagestyle{empty}
\pagestyle{empty}

\begin{abstract}

Cutting changes both the shape and topology of deformable objects, making accurate simulation challenging for robotic manipulation. A simulator must track the cutting tool as a cut develops, preserve the resulting discontinuities after tool withdrawal, and enable newly exposed surfaces to interact with the tool and with each other. Existing formulations often prescribe cut surfaces in advance or couple material separation to auxiliary geometric fields. We introduce \textsc{BladeMaster}, a GPU-accelerated cutting framework based on the total Lagrangian material point method (TLMPM). Our key idea is to encode the cutting history directly on material points through persistent side labels generated online from the blade geometry. These labels govern particle--grid coupling, preserving connectivity within intact material while preventing spurious coupling across cut faces after tool withdrawal. Our formulation supports progressive and intersecting cuts without predefined cut surfaces or particle duplication. Material--material contact enables cut surfaces to recontact and slide against each other without reconnecting, while two-way tool--material coupling allows material reaction forces to influence tool motion. Experiments demonstrate tool-driven cutting followed by manipulation, with faster-than-real-time performance on representative tasks. Project page: \href{https://jango6324.github.io/blademaster/}{https://jango6324.github.io/blademaster/}.

\end{abstract}

\section{Introduction}
\label{sec:introduction}

Cutting is an important manipulation capability for robots working with soft materials. As the blade advances through an object, it deforms the material, separates neighboring material regions, and interacts with newly exposed surfaces. Real-world cutting experiments are costly because each cut irreversibly damages the sample. An accurate and efficient simulator can therefore support material calibration, motion planning, and policy learning while reducing the need for physical experiments~\cite{Heiden2021DiSECt,Xu2023RoboNinja,Wang2025TopoCut}. Such a simulator must generate cuts online as the cutting tool moves and preserve the resulting discontinuities during subsequent contact and manipulation.

\begin{figure}[!t]
  \centering
  \includegraphics[width=\columnwidth]{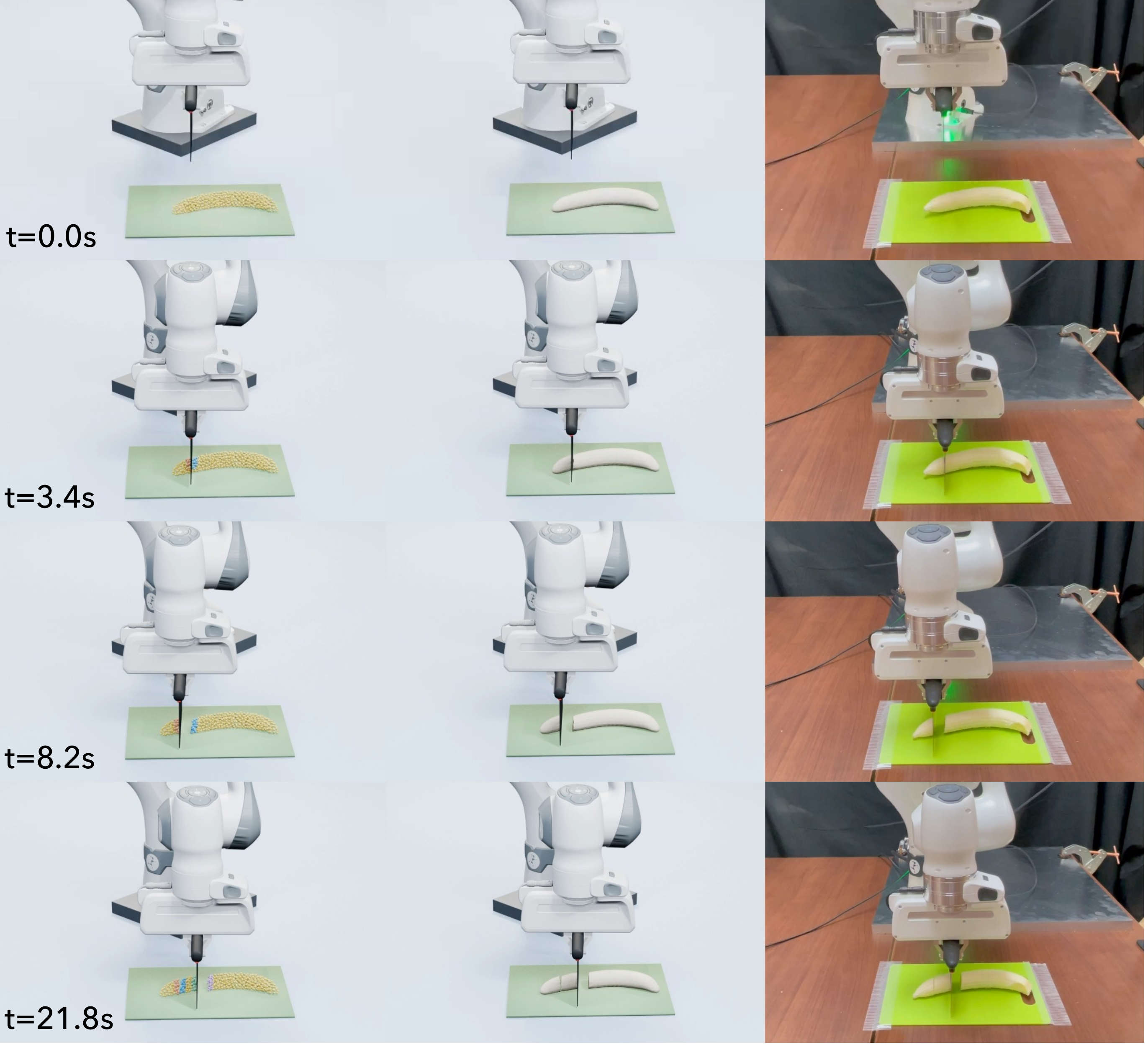}
  \caption{Sequential banana cutting and manipulation in simulation and in a real-world experiment. After each cut, the robot uses the cutting tool to push the resulting pieces. The columns show simulated particles (left), reconstructed surfaces (middle), and the physical experiment (right). From top to bottom, the rows show the states before cutting, during the first cut, after the first cut and push, and after the second cut and push.}
  \label{fig:banana-teaser}
  \vspace{-0.5cm}
\end{figure}

Existing approaches differ in how they create cuts and preserve the resulting separation. The mesh-based approach DiSECt~\cite{Heiden2021DiSECt} requires the geometry of the entire cut surface to be specified before simulation and constructs virtual nodes and cutting springs along it during preprocessing. Material separation is therefore confined to this predefined surface. In contrast, the material point method (MPM) is well suited to large-deformation simulation and can accommodate complex, evolving damage, but its shared grid states can introduce artificial coupling across a cut. CPIC~\cite{Hu2018MLSMPM} addresses this issue using a colored distance field to block particle--grid transfers across the cutting-tool surface. This separation, however, remains tied to the tool geometry, so nearby cut faces can become artificially coupled again after tool withdrawal. CRESSim-MPM~\cite{Ou2025CRESSim} further simplifies the enforcement of separation across cutting boundaries using signed distance functions. However, its formulation does not retain the cumulative changes in material connectivity induced by successive cuts, limiting the independent manipulation of the resulting pieces.

We introduce \textsc{BladeMaster}, a GPU-accelerated framework that generates cuts online from blade motion and retains them as part of the material state. We build on the total Lagrangian material point method (TLMPM), which uses a compact grid defined in the object's reference configuration, keeping potential coupling neighborhoods fixed regardless of material motion~\cite{deVaucorbeil2020TLMPM}. TLMPM eliminates cell-crossing errors and numerical fracture but requires material separation to be represented explicitly. We encode this separation by augmenting material points with persistent side labels assigned using the swept blade geometry. These labels determine which material points can exchange momentum through the fixed reference grid. This preserves coupling within intact material while maintaining mechanical separation between opposing cut faces after cutting-tool withdrawal. Successive cuts can be introduced through new side labels without overwriting earlier assignments, thereby preserving accumulated material discontinuities and enabling independent manipulation of the resulting pieces.
The persistent side labels also allow newly created cut surfaces to continue interacting with the tool and with each other. Our contact model handles these interactions while preserving the distinction between opposing cut faces, allowing them to recontact and undergo relative sliding without reconnecting. Two-way coupling between MPM and a rigid-body solver transfers material reaction forces to the tool, allowing its motion to respond to the evolving material state.

We evaluate the framework in cutting and manipulation scenarios and compare it against prior methods. We validate the simulated cutting and manipulation behavior against a physical robot experiment (Fig.~\ref{fig:banana-teaser}) and evaluate the predicted cutting resistance against physical measurements. Our GPU implementation simulates cutting and subsequent manipulation at faster-than-real-time rates on representative tasks. Our main contributions are as follows:

\begin{itemize}
  \item A TLMPM-based cutting formulation that assigns persistent side labels to material points online to govern momentum-conserving particle--grid coupling and preserve discontinuities after tool withdrawal. The formulation supports progressive and intersecting cuts without predefined cut surfaces or particle duplication.

  \item A real-time GPU framework that integrates evolving cut topology with material--material contact and two-way tool--material coupling, enabling cut surfaces to recontact and slide against each other without reconnecting and supporting subsequent manipulation of the separated pieces.
\end{itemize}

\section{Related Work}
\label{sec:related-work}

\paragraph{Cutting and discontinuity-aware simulation}
Mesh-based cutting methods explicitly modify or enrich the simulation topology. Virtual-node methods duplicate portions of intersected elements~\cite{molino2004virtual,sifakis2007arbitrary}, while XFEM represents discontinuities through enrichment without conforming remeshing~\cite{jerabkova2009stable,koschier2017robust}. Generalized XFEM extends this capability to complex and intersecting cuts through robust Boolean operations~\cite{ton2024generalized}. These methods provide explicit representations of cut surfaces but require cut-dependent enrichment, integration, or topology bookkeeping. For robotic cutting, DiSECt combines FEM, virtual nodes, cutting springs, and differentiable simulation for material calibration and trajectory optimization, while requiring the entire cut surface to be specified during preprocessing~\cite{Heiden2021DiSECt}. Wind Lifter incorporates winding-number coordinates into a neural reduced-order model for real-time cutting of thin structures~\cite{chang2025lifting}.

\paragraph{Fracture and cutting with MPM}
MPM accommodates large deformation and material separation without conforming remeshing. Early work introduced multiple velocity fields for explicit cracks~\cite{nairn2003cracks} and cohesive zones for tool-driven cutting~\cite{nairn2015cutting}. Subsequent damage-based approaches model continuum-damage and phase-field fracture~\cite{wolper2019cdmpm}, nonlocal isotropic and anisotropic damage~\cite{wolper2020anisompm}, or fragmentation with evolving multi-body contact~\cite{xiao2021dpmpm}. More recent work uses damage-informed enriched basis functions~\cite{sugai2024diffusive} and models damage and tearing within an embodied differentiable framework calibrated from real-world observations~\cite{chen2026empm}. These damage-based methods primarily address material-driven fracture, whereas robotic cutting requires separation to evolve with the moving tool. CPIC~\cite{Hu2018MLSMPM} combines a colored distance field with particle--grid compatibility to block transfers across a cutting surface while supporting two-way coupling. However, particle--grid transfers across the cut may resume after tool withdrawal. CRESSim-MPM~\cite{Ou2025CRESSim} uses a unified SDF representation for both cut-boundary separation and rigid contact. SDF signs determine particle--grid compatibility, while absolute distances combined with a fattened collision region are used for contact resolution. However, the method does not encode cumulative changes in material connectivity from successive cuts as part of the material state. TopoCut~\cite{Wang2025TopoCut} combines MLS-MPM, progressive damage, and topology reconstruction for multi-step robotic cutting and policy learning. However, its reported particle--grid transfers remain single-field and do not distinguish fragment identities.

\paragraph{Total-Lagrangian MPM and contact modeling.}
TLMPM evaluates particle--grid interactions in the reference configuration, maintaining fixed neighborhoods and avoiding cell-crossing errors and numerical fracture~\cite{deVaucorbeil2020TLMPM}, while contact between bodies with independent reference grids must be handled explicitly. Particle-based~\cite{deVaucorbeil2021Contact} and contact-grid~\cite{bui2025tlmpmcontact} formulations address these interactions, with the latter offering improved scalability. These methods handle contact between already distinct bodies, whereas cutting also changes material connectivity as the blade advances. Within TLMPM, \textsc{BladeMaster} uses persistent cut-side labels to control reference-grid coupling, while explicit contact preserves the separation of opposing cut surfaces during recontact and relative sliding.

\section{Method}
\label{sec:method}

\subsection{System and Notation}
\label{sec:method-overview}

We simulate a deformable body in contact with a moving rigid cutting tool. The body is discretized into material points and advanced in time using TLMPM\@. At time step $n$, the tool pose $(\mathbf{x}_B^n,\mathbf{R}_B^n)$ and twist $(\mathbf{v}_B^n,\boldsymbol{\omega}_B^n)$ define the contact geometry and motion over an MPM substep. The contact solver applies the resulting impulse $\mathbf{J}_{pB}$ to particle $p$ and returns the corresponding reaction wrench on the tool (Section~\ref{sec:method-contact}). An external rigid-body system handles tool actuation, constraints, and state updates.

Particle $p$ stores its reference position $\mathbf{X}_p$, current position $\mathbf{x}_p^n$, velocity $\mathbf{v}_p^n$, deformation gradient $\mathbf{F}_p^n$, reference volume $V_p^0$, and mass $m_p$. We augment this standard TLMPM state with a cutting-progress variable $D_p\in[0,1]$ and a cut-side code $c_p$. At the beginning of each new cut, we reset $D_p$ to zero. During each substep, material--material and tool--material contact impulses are scattered to the TLMPM grid. Valid tool contact advances $D_p$ and modulates the resistance opposing the blade, while the swept blade band updates $c_p$ according to the particle's side relative to the blade. Starting with the next TLMPM substep, the resulting codes determine which particle contributions may share grid states (Fig.~\ref{fig:method-grid-coupling}). The same blade sweep disconnects intersected links in the reference grid and triggers local surface reconstruction.

\begin{figure}[t]
  \centering
  \includegraphics[width=0.95\columnwidth]{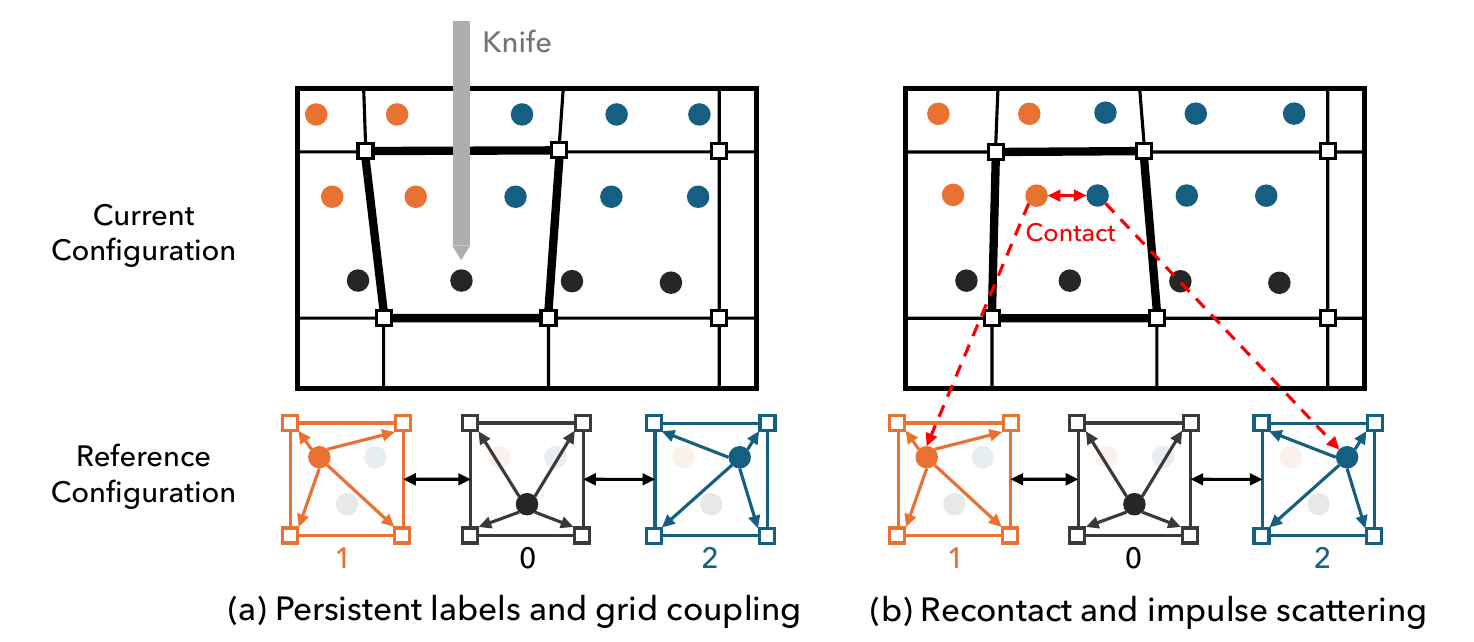}
  \vspace{-3mm}
  \caption{Persistent cut-side labels and grid coupling. (a) Labels 1 and 2 denote opposite sides of a cut, while label 0 denotes unassigned material. Grids 1 and 2 each couple with grid 0 but do not couple directly with each other. (b) After knife withdrawal, the labels persist, and impulses generated upon recontact are scattered to the corresponding grids without restoring direct coupling. The bold-outlined grids in the current configuration and the grids in the reference configuration represent the same cell.
  }
  \label{fig:method-grid-coupling}
  \vspace{-0.5cm}
\end{figure}

\subsection{TLMPM Discretization}
\label{sec:method-tlmpm}

We adopt an explicit modified-update-stress-last (MUSL) scheme for TLMPM~\cite{deVaucorbeil2020TLMPM} and summarize below the quantities required for cutting and contact. Let $i$ index the reference-grid nodes, and let $N_i$ denote the associated interpolation function. Unlike conventional MPM, TLMPM evaluates interpolation weights and their gradients at fixed particle reference positions rather than at their current positions:
\begin{equation}
  N_{ip}=N_i(\mathbf{X}_p),
  \qquad
  \boldsymbol{\nabla}_0N_{ip}
  =\left.\frac{\partial N_i}{\partial\mathbf{X}}
  \right|_{\mathbf{X}_p}.
  \label{eq:method-reference-shape}
\end{equation}
The resulting particle--grid stencil remains constant and can therefore be precomputed. In TLMPM, material motion cannot introduce new grid-mediated coupling outside the original reference neighborhoods. Our formulation can thus control changes in material connectivity solely by updating the cut-side codes. The cut-side code $c_p$ determines the logical grid to which particle $p$ scatters its mass, momentum, and internal-force impulse. For an MPM substep of size $\Delta t$, the reference-configuration P2G transfer independently accumulates these quantities on each logical grid $c$ before coupling compatibility is enforced:
\begin{align}
  m_i^c
  &=\sum_{p:c_p=c}N_{ip}m_p,\nonumber\\
  \mathbf{p}_i^{n,c}
  &=\sum_{p:c_p=c}N_{ip}m_p\mathbf{v}_p^n,\nonumber\\
  \Delta\mathbf{p}_i^{\mathrm{int},c}
  &=-\Delta t\sum_{p:c_p=c}
  V_p^0\mathbf{P}_p^n\boldsymbol{\nabla}_0N_{ip},
  \label{eq:method-reference-p2g}
\end{align}
where $\mathbf{P}_p^n$ is the first Piola--Kirchhoff stress. We use a fixed-corotated constitutive model, optionally augmented by a von Mises-type plasticity projection and Newtonian viscosity. These constitutive models are independent of the cutting formulation. External and contact impulses are accumulated on the same logical grids before the grid update. After enforcing compatibility as described in Section~\ref{sec:method-topology}, we perform the standard explicit grid update, PIC/FLIP G2P transfer, and MUSL deformation-gradient update~\cite{deVaucorbeil2020TLMPM}. Compatibility is enforced during both the initial P2G transfer and the MUSL velocity remapping.

\subsection{Cutting Resistance and Side Assignment}
\label{sec:method-cutting}

For sequential cutting, an empirical law based on directional contact work updates $D_p$ to modulate resistance to the blade, while the swept blade geometry determines persistent side labels that encode topological separation.

\paragraph{Blade local coordinates}
For cut $k$, let $\mathbf{x}_e$ denote a point on the blade edge, and let the world-frame unit vectors $\mathbf{n}_k$, $\mathbf{e}_k$, and $\mathbf{d}_k$ define the cutting-plane normal, the axis along the finite blade edge, and the front--back direction, respectively, with $\mathbf{d}_k$ pointing toward the cutting front. These quantities are obtained by transforming their fixed tool-frame counterparts using the current tool pose. For a particle $\mathbf{x}_p$, we define $\mathbf{r}_{p,k}=\mathbf{x}_p-\mathbf{x}_e$ and compute
\begin{equation}
  s_{p,k}=\mathbf{r}_{p,k}\cdot\mathbf{n}_k,
  \quad
  \ell_{p,k}=\mathbf{r}_{p,k}\cdot\mathbf{e}_k,
  \quad
  f_{p,k}=\mathbf{r}_{p,k}\cdot\mathbf{d}_k.
  \label{eq:method-blade-coordinates}
\end{equation}
Here, $s_{p,k}$ identifies the side of the cutting plane, $\ell_{p,k}$ measures position along the finite blade edge, and $f_{p,k}$ distinguishes material in front of and behind the edge. Let $\mathbf{v}_e$ denote the velocity of the blade point $\mathbf{x}_e$. The relative cut-advance speed along $\mathbf{d}_k$ is
$u_{c,p}=(\mathbf{v}_e-\mathbf{v}_p)\cdot\mathbf{d}_k$.
Given the band half-width $h_k$, edge half-length $L_k$, rear extent $b_{\mathrm{back}}$, and lead extent $b_{\mathrm{lead}}$, we define the finite \emph{cutting band} by
\begin{align}
  |s_{p,k}|&\le h_k,
  \qquad
  |\ell_{p,k}|\le L_k,
  \nonumber\\
  -b_{\mathrm{back}}-\max(u_{c,p}\Delta t,0)
  &\le f_{p,k}\le b_{\mathrm{lead}}.
  \label{eq:method-cut-band}
\end{align}

\paragraph{Directional contact work}
We accumulate the blade-to-particle contact work along the cut-advance direction only for contacts within the front portion of the cutting band ($0 \le f_{p,k} \le b_{\mathrm{lead}}$) and when $u_{c,p}>u_{\min}$, where $u_{\min}$ is the minimum cut-advance speed:
\begin{align}
  J_{c,p}
  &=\max(\mathbf{J}_{pB}\cdot\mathbf{d}_k,0),
  \quad \Delta W_{p,k}=J_{c,p}u_{c,p},\nonumber\\
  D_p&\leftarrow\min\left(
  D_p+\frac{\Delta W_{p,k}}{W_{\mathrm{th},p}},1\right).
  \label{eq:method-contact-work}
\end{align}
The product $J_{c,p}u_{c,p}$ approximates the incremental contact work along the cut-advance direction. We define an empirical work threshold as $W_{\mathrm{th},p}=G_c(V_p^0)^{2/3}(1-\zeta)/\zeta$, where $G_c$ is the material cutting-energy scale per unit area and $\zeta\in(0,1)$ parameterizes blade sharpness. Within the front region, the positive component of the contact impulse along $\mathbf{d}_k$ is applied until the accumulated directional work reaches this threshold, i.e., $D_p=1$. This impulse component is then released. The variable $D_p$ affects only this contact component and does not modify the constitutive stress.

Side assignment is determined by the swept blade geometry. An unassigned particle within the cutting band receives side digit when it lies behind the blade edge ($f_{p,k}\le 0$):
\begin{equation}
  d_{p,k}=
  \begin{cases}
    1,&s_{p,k}<0,\\
    2,&s_{p,k}\ge0.
  \end{cases}
  \label{eq:method-side-assignment}
\end{equation}
Assigning a side label does not instantaneously change the particle's position, velocity, or stress. Instead, the assignment changes P2G compatibility and contact selection beginning with the next TLMPM substep, as described in Section~\ref{sec:method-topology}.

\subsection{Topology-Aware Grid Transfer}
\label{sec:method-topology}

Cut-side labels determine how particles share grid states. For $K$ cuts, each particle $p$ stores one ternary side digit $d_{p,k}\in\{0,1,2\}$ per cut: 0 indicates that the particle has not yet been reached by cut $k$, while 1 and 2 indicate the two opposite sides of that cut. The cut-side code of particle $p$ is then defined as
\begin{equation}
  c_p=\sum_{k=0}^{K-1}d_{p,k}3^k.
  \label{eq:method-ternary-code}
\end{equation}
For each cut, a digit can change only from 0 to 1 or 2. Existing nonzero digits persist as subsequent cuts assign values to additional digits, giving each particle a unique code even at cut intersections. This representation supports progressive and intersecting cuts without particle duplication.

Two codes conflict if they encode opposite sides of at least one cut:
\begin{equation*}
\operatorname{conflict}(a,b)
\Longleftrightarrow
\exists k:
(d_k(a),d_k(b))\in\{(1,2),(2,1)\}.
\end{equation*}
Here, $d_k(\cdot)$ extracts the $k$-th ternary digit of a code. The codes are compatible otherwise. In particular, an unassigned digit of 0 is compatible with both sides. Thus, intact material ahead of the blade remains coupled, while the cut faces behind it remain separated (Fig.~\ref{fig:method-grid-coupling}(a)).

Let $\mathcal{S}_i^c=(m_i^c,\mathbf{p}_i^{n,c},\Delta\mathbf{p}_i^c)$ denote the independently accumulated state of logical grid $c$ at node $i$ after P2G transfer and impulse scattering. Here, $\Delta\mathbf{p}_i^c$ contains the internal, external, and contact impulses. With $M_i=\sum_c m_i^c$, we couple compatible nodal states through antisymmetric pairwise exchanges that conserve momentum:
\begin{equation}
  \bar{\mathcal{S}}_i^a
  =\mathcal{S}_i^a+\frac{1}{M_i}
  \sum_{\substack{b\ne a\\
  \neg\operatorname{conflict}(a,b)}}
  \left(m_i^a\mathcal{S}_i^b-m_i^b\mathcal{S}_i^a\right).
  \label{eq:method-compatible-coupling}
\end{equation}
Each logical grid retains its P2G mass, $\bar{m}_i^a=m_i^a$. The momentum coupling preserves any common velocity shared by compatible occupied grids at a node because each pairwise momentum exchange then vanishes. The coupled momentum and accumulated impulse yield the updated nodal velocity:
\begin{equation}
  \mathbf{v}_i^{n+1,a}
  =\frac{\bar{\mathbf{p}}_i^{n,a}
  +\overline{\Delta\mathbf{p}}_i^a}{m_i^a},
  \qquad m_i^a>0.
  \label{eq:method-coupled-grid-update}
\end{equation}
G2P reads from each particle's own logical grid $c_p$. Compatible grids exchange momentum, whereas conflicting grids remain separate even when their reference stencils overlap, allowing opposite cut sides to evolve independently. Our explicit contact model nevertheless allows the opposing faces to recontact without reconnecting (Section~\ref{sec:method-contact}).
A narrow cutting band may leave unassigned particles that indirectly couple opposing cut sides. In practice, we use $h_k=6\Delta x$, where $\Delta x$ is the reference-grid spacing.

We apply the same compatibility coupling during the initial P2G transfer and the MUSL re-scattering of updated particle momentum, ensuring that both operations follow the same material connectivity. The appendix proves momentum conservation for PIC/FLIP transfers and MUSL re-scattering.

\subsection{Contact Model}
\label{sec:method-contact}

Separate logical grids give particles on opposite cut sides independent velocities but cannot prevent interpenetration in the current configuration. We therefore detect material-point contact and scatter the resulting impulses through fixed reference stencils. Following the TLMPM contact model~\cite{deVaucorbeil2021Contact}, we use a unified formulation for inter-object contact, self-contact, cut-face recontact, and rigid-tool contact. All interactions use the same Coulomb friction and impulse-scattering procedure, with persistent contacts retaining tangential history for static friction.

\paragraph{Material--material contact}
Particle $p$ is represented for contact by a sphere of radius $R_p=s_R(V_p^0)^{1/3}/2$, where $s_R$ is the contact-radius scale.  For a pair $(p,q)$, let $\mathbf{n}_{pq}=(\mathbf{x}_p-\mathbf{x}_q)/ \|\mathbf{x}_p-\mathbf{x}_q\|$ and $v_{n,pq}=(\mathbf{v}_p-\mathbf{v}_q)\cdot\mathbf{n}_{pq}$.  We use the predicted overlap
\begin{equation}
  \widehat{\delta}_{pq}
  =R_p+R_q-
  \left[\|\mathbf{x}_p-\mathbf{x}_q\|
  +\min(\Delta t\,v_{n,pq},0)\right]
  \label{eq:method-pair-overlap}
\end{equation}
and the reduced mass $m_{pq}^{\mathrm{eff}}=m_pm_q/(m_p+m_q)$ to obtain the normal force
\begin{equation}
  f_{n,pq}
  =s_n\frac{m_{pq}^{\mathrm{eff}}}{\Delta t^2}
  \max(\widehat{\delta}_{pq},0).
  \label{eq:method-pair-normal}
\end{equation}
Here, $s_n$ scales the normal contact response. For self-contact, nearby reference neighbors are excluded because they are already coupled by the continuum discretization. Pairs with conflicting cut-side codes are exempt from this exclusion and remain candidates for crack-face contact.

To model static friction, we adapt the tangential spring history from TinyDEM~\cite{Vetter2026TinyDEM}.  Each persistent contact stores a tangential spring displacement $\mathbf{u}_t$, which is transported to the current tangent plane at each step to obtain $\widetilde{\mathbf{u}}_t$.  For the material pair defined above, let $\mathbf{v}_{\mathrm{rel}}=\mathbf{v}_p-\mathbf{v}_q$, $\mathbf{n}=\mathbf{n}_{pq}$, $f_n=f_{n,pq}$, and $m_{\mathrm{eff}}=m_{pq}^{\mathrm{eff}}$.  We compute the tangential relative velocity $\mathbf{v}_t$ and trial tangential force $\mathbf{f}_t^{\mathrm{tr}}$:
\begin{align}
  \mathbf{v}_t
  &=\mathbf{v}_{\mathrm{rel}}
  -(\mathbf{v}_{\mathrm{rel}}\cdot\mathbf{n})\mathbf{n},\nonumber\\
  \mathbf{f}_t^{\mathrm{tr}}
  &=-k_t\widetilde{\mathbf{u}}_t-c_t\mathbf{v}_t,\nonumber\\
  k_t&=\gamma_k\frac{m_{\mathrm{eff}}}{\Delta t^2},
  \qquad
  c_t=2\gamma_c\sqrt{m_{\mathrm{eff}}k_t}.
  \label{eq:method-friction-trial}
\end{align}
Here, $k_t$ and $c_t$ are the tangential stiffness and damping coefficients. $\gamma_k$ scales the stiffness relative to $m_{\mathrm{eff}}/\Delta t^2$, while $\gamma_c$ sets the damping ratio relative to critical damping.  The static and dynamic friction coefficients are $\mu_s$ and $\mu_d$. The trial force is retained while $\|\mathbf{f}_t^{\mathrm{tr}}\|\le\mu_s f_n$.  During sticking, we store $\widetilde{\mathbf{u}}_t+\Delta t\,\mathbf{v}_t$ as the tangential history for the next step.  Once the static limit is exceeded, the contact switches to kinetic friction at the dynamic Coulomb bound $\mu_d f_n$, and the stored displacement is adjusted to match the applied force. The same history-based model is used for material--material and material--rigid contact.

The resulting pairwise impulse $\mathbf{J}_{pq}$ is applied as equal-and-opposite impulses to $p$ and $q$ and then scattered to their respective logical grids:
\begin{equation}
  \Delta\mathbf{p}_i^{c_p}\mathrel{+}=N_{ip}\mathbf{J}_{pq},
  \qquad
  \Delta\mathbf{p}_j^{c_q}\mathrel{-}=N_{jq}\mathbf{J}_{pq}.
  \label{eq:method-pair-scatter}
\end{equation}
Indices $i$ and $j$ range over the stencils of particles $p$ and $q$, respectively. Thus, contact is detected in the current configuration, while the resulting impulse is incorporated into the fixed-reference TLMPM update (Fig.~\ref{fig:method-grid-coupling}(b)).

\paragraph{Tool--material contact}
For particle $p$, we use the spherical contact proxy defined above. A closest-point query on the tool collision geometry returns the surface point $\mathbf{x}_c$, outward normal $\mathbf{n}_B$, and signed sphere--surface gap $\phi_p$. With the tool pose and twist defined above, the relative contact velocity and predicted gap are
\begin{align}
  \mathbf{v}_{\mathrm{rel}}
  &=\mathbf{v}_p-
  \left[\mathbf{v}_B+
  \boldsymbol{\omega}_B\times(\mathbf{x}_c-\mathbf{x}_B)\right],\nonumber\\
  \widehat{\phi}_p
  &=\phi_p+\Delta t\min(
  \mathbf{v}_{\mathrm{rel}}\cdot\mathbf{n}_B,0).
  \label{eq:method-rigid-contact-state}
\end{align}
As in material--material contact, we use the particle mass $m_p$ for the spherical proxy.  Let $w_B$ denote the rigid body's inverse effective mass at $\mathbf{x}_c$ along $\mathbf{n}_B$, with $w_B=0$ for a kinematically prescribed tool.  The combined contact effective mass is $m_{pB}^{\mathrm{eff}}=(m_p^{-1}+w_B)^{-1}$.
Using $m_{pB}^{\mathrm{eff}}$, the one-step normal response corresponding to Eq.~\eqref{eq:method-pair-normal} gives
\begin{equation}
  J_{n,pB}
  =s_n\frac{m_{pB}^{\mathrm{eff}}}{\Delta t}
  \max(-\widehat{\phi}_p,0).
  \label{eq:method-rigid-normal-impulse}
\end{equation}
Here, $\widehat{\phi}_p<0$ indicates predicted particle--tool penetration. The tangential component uses the same history-based Coulomb response, with Eq.~\eqref{eq:method-friction-trial} evaluated using $m_{\mathrm{eff}}=m_{pB}^{\mathrm{eff}}$ and $f_n=J_{n,pB}/\Delta t$. We scatter the final impulse $\mathbf{J}_{pB}$ to the particle's logical grid and return the equal-and-opposite reaction force $\mathbf{F}_{B,p}$ and torque $\boldsymbol{\tau}_{B,p}$ to the tool interface:
\begin{align}
  \Delta\mathbf{p}_i^{c_p}
  &\mathrel{+}=N_{ip}\mathbf{J}_{pB},\nonumber\\
  \mathbf{F}_{B,p}
  &=-\frac{\mathbf{J}_{pB}}{\Delta t},
  \qquad
  \boldsymbol{\tau}_{B,p}
  =(\mathbf{x}_c-\mathbf{x}_B)\times\mathbf{F}_{B,p}.
  \label{eq:method-rigid-reaction}
\end{align}
Thus, a single contact impulse updates the material motion, generates the corresponding tool reaction, and provides the contact work used by the cutting-resistance model.

\subsection{Reference-Grid Surface Reconstruction}
\label{sec:method-surface}

For visualization, we reconstruct the visible boundary on a uniform reference grid using the linked-volume representation~\cite{Wu2011BoundarySurfaces} and local dual contouring~\cite{Ju2002DualContouring}. Intersections between primal-grid links and the original boundary or blade-generated cut surfaces are stored as Hermite data, each comprising a point and a normal. Sweeping the finite blade edge disconnects the intersected links. Within each incident cell, connected components of the remaining links define local material regions. We update only affected cells and duplicate surface representatives across distinct regions to form the two coincident faces of a cut.

Surface motion is driven by TLMPM through one-way coupling. Each surface representative inherits the cut-side code of its local material region and is embedded using fixed trilinear weights in the corresponding compatible TLMPM motion field. It therefore follows the deformation of its side without introducing additional dynamics. Link connectivity determines the visual topology, while the side-aware embedding keeps the surface motion consistent with the mechanical discontinuity.

\subsection{GPU Implementation and Rigid-Tool Interface}
\label{sec:method-implementation}
We implement stress evaluation, TLMPM transfers, contact handling, logical-grid aggregation, and cutting-state updates as GPU kernels. Active topology codes are compacted into contiguous physical grid slots, so memory and computation scale with the number of active states rather than the full $3^K$ code space. The external rigid-body system may update the tool at a lower frequency than TLMPM\@. The tool pose and twist remain fixed during the intervening TLMPM substeps, while reaction impulses are accumulated and returned as an interval-averaged wrench that preserves the net linear and angular impulses.

\section{Experiments}
\label{sec:experiments}

\subsection{Comparison with Existing Cutting Mechanisms}

Unlike the mesh-based DiSECt method~\cite{Heiden2021DiSECt}, \textsc{BladeMaster} does not require cut surfaces to be predefined. To evaluate this online cut-surface generation capability, we release a \(0.1\,\mathrm{kg}\) knife above a soft elastic block with an initial tilt of \(25^\circ\). The block has Young's modulus \(E=6\,\mathrm{kPa}\), Poisson's ratio \(\nu=0.4\), and density \(\rho=1100\,\mathrm{kg\,m^{-3}}\) (Fig.~\ref{fig:gravity-driven-cutting}(a)--(c)). The knife subsequently moves only under gravity and material reaction forces, which rotate it and deflect its trajectory. Our method generates the cut surface online along this evolving trajectory, allowing the material response to influence the cut geometry. The newly created surfaces later recontact and slide relative to one another without reconnecting. In a separate run with a lower knife-sharpness setting, the knife does not cut through the block under gravity (Fig.~\ref{fig:gravity-driven-cutting}(d)).

\begin{figure}[hbt]
  \centering
  \includegraphics[width=\columnwidth]{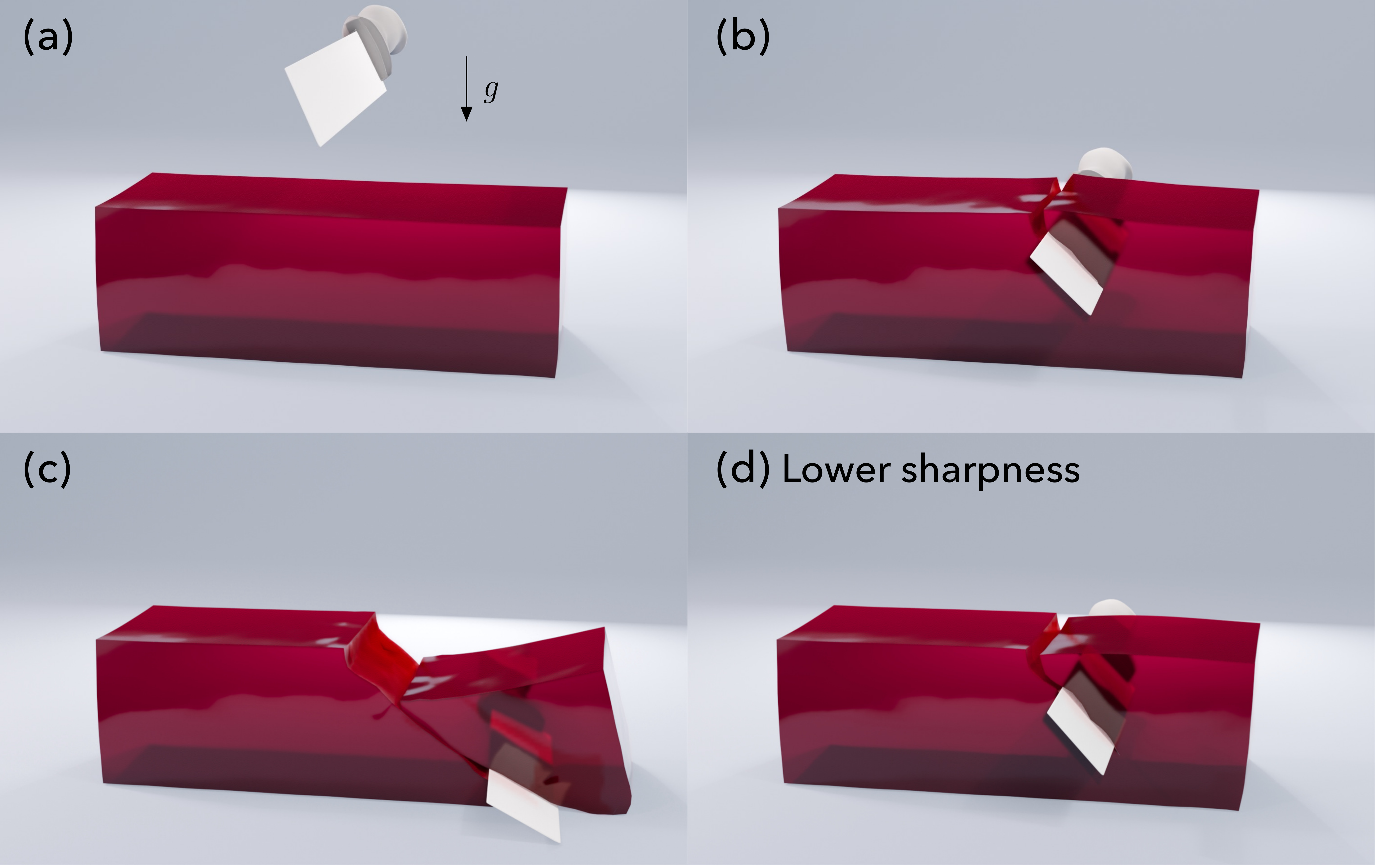}
  \caption{Online cutting with a freely moving knife. (a)--(c) The knife falls under gravity, while material reaction forces rotate it and deflect its cutting trajectory. (d) With a lower knife-sharpness setting, the knife does not cut through the block.}
  \label{fig:gravity-driven-cutting}
\end{figure}

In single-field MPM, opposing cut faces can recouple through shared grid nodes after knife withdrawal. To isolate the effect of cut-side compatibility, we compare \textsc{BladeMaster} with a single-field baseline in which particle--grid transfers ignore cut-side codes once the knife is withdrawn. Both simulations use identical particle sampling and knife motion for a cylindrical body with Young's modulus \(E=10\,\mathrm{kPa}\), Poisson's ratio \(\nu=0.4\), and density \(\rho=1100\,\mathrm{kg\,m^{-3}}\). A simulated Franka arm performs two radial cuts to form a slice, withdraws the knife after each cut, and then inserts it beneath the slice to lift it (Fig.~\ref{fig:persistent-cake-cut}). \textsc{BladeMaster} lifts only the slice, whereas shared-grid coupling in the baseline causes the surrounding material to rise with it. This contrast demonstrates that persistent side labels preserve mechanical separation after knife withdrawal.

\begin{figure}[hbt]
  \centering
  \includegraphics[width=\columnwidth]{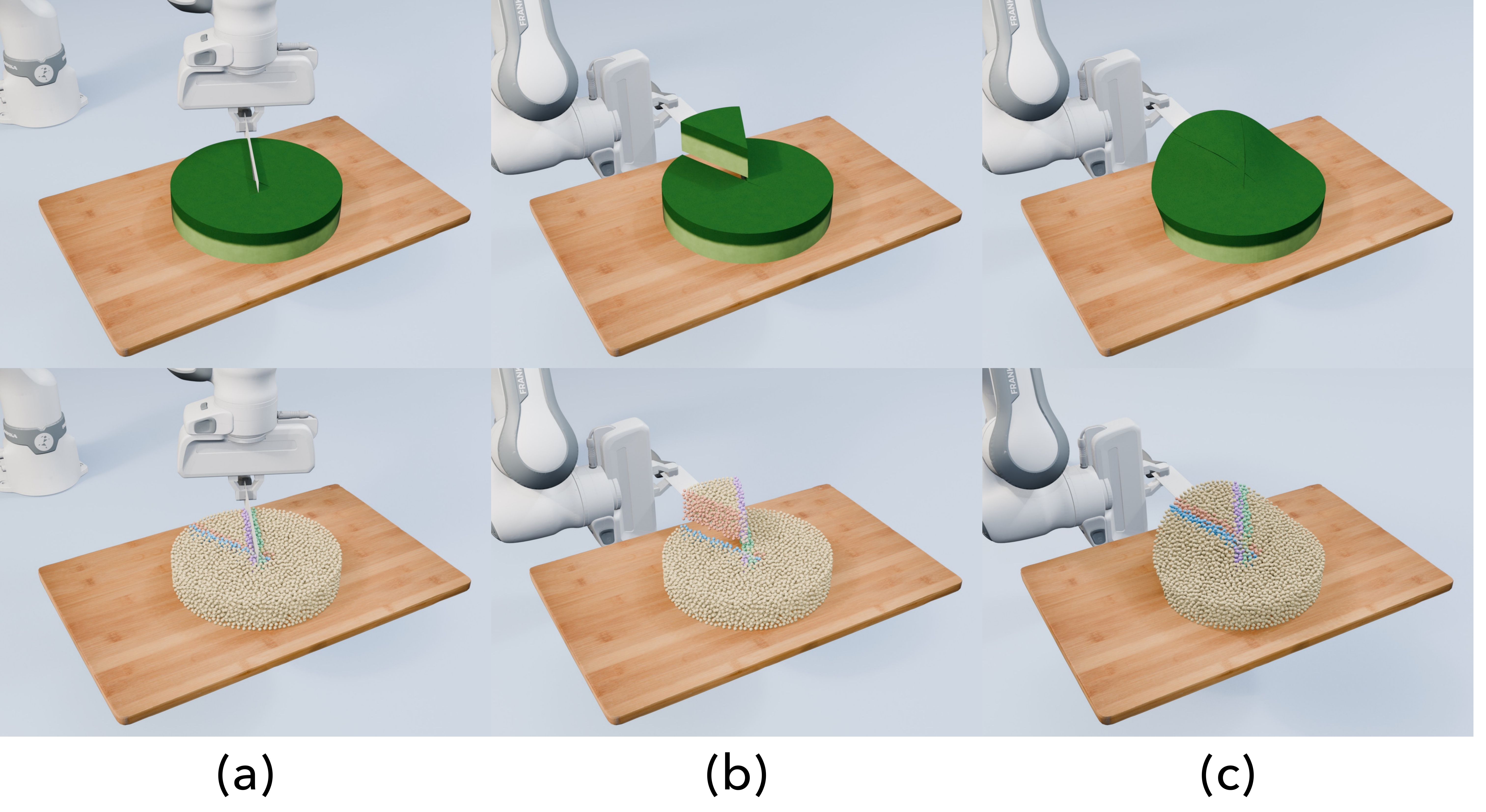}
  \vspace{-6mm}
  \caption{Mechanical separation during slice lifting. Rows show reconstructed surfaces (top) and particles (bottom). (a) Two radial cuts form a slice. After the knife is withdrawn from the cut region, (b) \textsc{BladeMaster} lifts only the slice, whereas (c) the single-field baseline drags the surrounding material upward.}
  \label{fig:persistent-cake-cut}
\end{figure}

Figure~\ref{fig:cressim-comparison} compares how \textsc{BladeMaster} and CRESSim-MPM~\cite{Ou2025CRESSim} preserve cut history. Two parallel cuts divide a rectangular block into three pieces. Both methods use identical particle sampling, elastic parameters, blade trajectories, and a \(0.5\,\mathrm{mm}\)-thick blade.
Halfway through the first cut, the mean gap width is \(0.654\,\mathrm{mm}\) with \textsc{BladeMaster}, close to the blade thickness, whereas CRESSim-MPM produces a mean gap width of \(4.059\,\mathrm{mm}\) (Fig.~\ref{fig:cressim-comparison}(a1) and (a2)). After both cuts, the volume contraction of the middle piece, estimated from tetrahedral volumes, is \(0.506\%\) with \textsc{BladeMaster} and \(8.746\%\) with CRESSim-MPM (Fig.~\ref{fig:cressim-comparison}(b1) and (b2)).
After blade withdrawal, we assign the same initial leftward velocity to only the left piece in each simulation. Figure~\ref{fig:cressim-comparison}(c1) and (c2) shows the resulting states after the same elapsed time. With \textsc{BladeMaster}, the left piece moves away while the middle and right pieces remain near their original positions. With CRESSim-MPM, shared nodal velocities redistribute momentum among the pieces during subsequent transfers, causing the middle and right pieces to move leftward and the left piece to travel a shorter distance. These results demonstrate that \textsc{BladeMaster} supports independent manipulation after successive cuts by retaining the accumulated changes in material connectivity during subsequent motion.

\begin{figure}[hbt]
  \centering
  \includegraphics[width=\columnwidth]{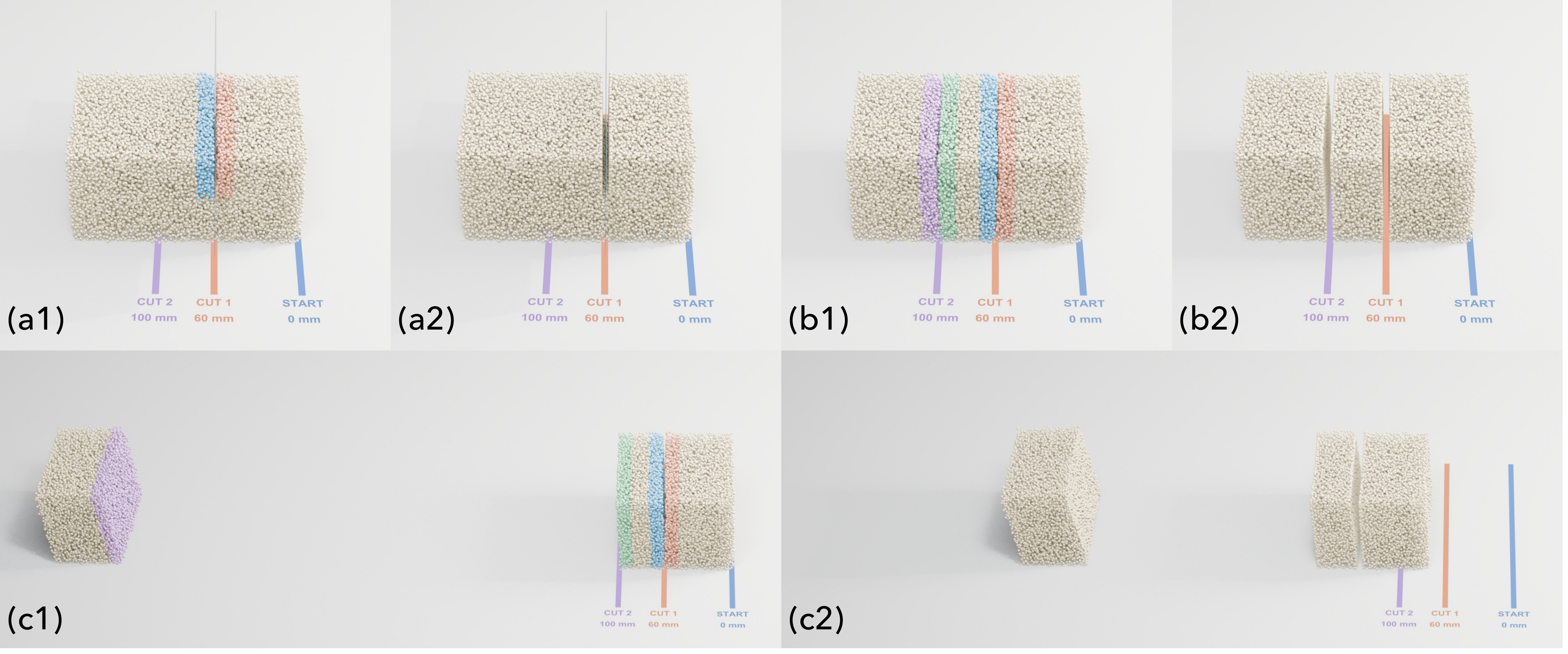}
  \caption{Sequential cutting and manipulation: \textsc{BladeMaster} (a1)--(c1) and CRESSim-MPM (a2)--(c2). (a) First cut at half depth. (b) Both cuts completed. (c) States at the same elapsed time after giving only the left piece an initial leftward velocity. Colored bands identify cut sides for \textsc{BladeMaster}, and ground markers provide fixed position references.}
  \label{fig:cressim-comparison}
  \vspace{-0.6cm}
\end{figure}

\subsection{Cutting Scenarios}

The following scenarios illustrate the range of cutting interactions supported by our method. Figure~\ref{fig:throwing-knife} shows three freely moving knives launched simultaneously toward the same deformable block. Their initial speeds are \(0.5\), \(0.8\), and \(1.1\,\mathrm{m\,s^{-1}}\), ordered from left to right. Their trajectories are not prescribed. The knives and deformable body evolve through two-way coupling: material reactions decelerate the knives, while higher initial speeds produce greater penetration depths. Because the cutting-progress updates act on separate particle sets, the three cuts can be simulated concurrently. The block is modeled using fixed-corotated elasticity with von Mises plasticity, with \(E=20\,\mathrm{kPa}\), \(\nu=0.4\), \(\rho=1200\,\mathrm{kg\,m^{-3}}\), and a yield stress of \(2\,\mathrm{kPa}\).

\begin{figure}[hbt]
  \centering
  \includegraphics[width=\columnwidth]{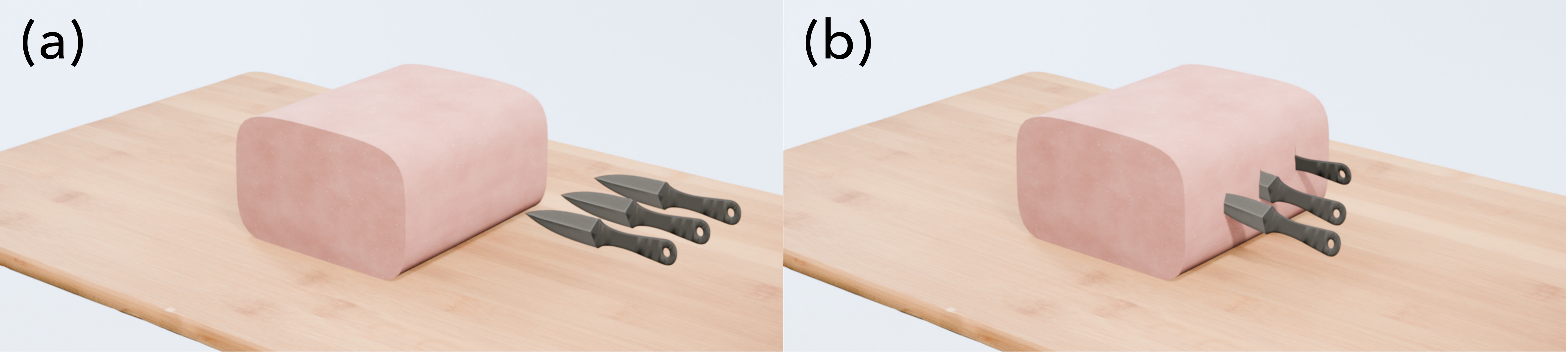}
  \caption{Impact and penetration of multiple knives into a deformable block. (a) Three freely moving knives are launched simultaneously, with initial speeds increasing from left to right. (b) Material reaction forces decelerate the knives, resulting in different penetration depths.}
  \label{fig:throwing-knife}
\end{figure}

In Fig.~\ref{fig:knife-tip-scoring}, a Franka arm produces intersecting partial cuts by guiding the knife tip along a cross-shaped path. To isolate the effect of penetration depth, we repeat the same in-plane motion while varying only how deeply the knife tip penetrates the material. The shallow pass produces light surface scores, whereas the deeper pass creates more pronounced incisions. In both cases, the two cut paths intersect at the center but do not pass through the body. The block is modeled with \(E=6\,\mathrm{kPa}\), \(\nu=0.35\), and \(\rho=1100\,\mathrm{kg\,m^{-3}}\).

\begin{figure}[hbt]
  \centering
  \includegraphics[width=\columnwidth]{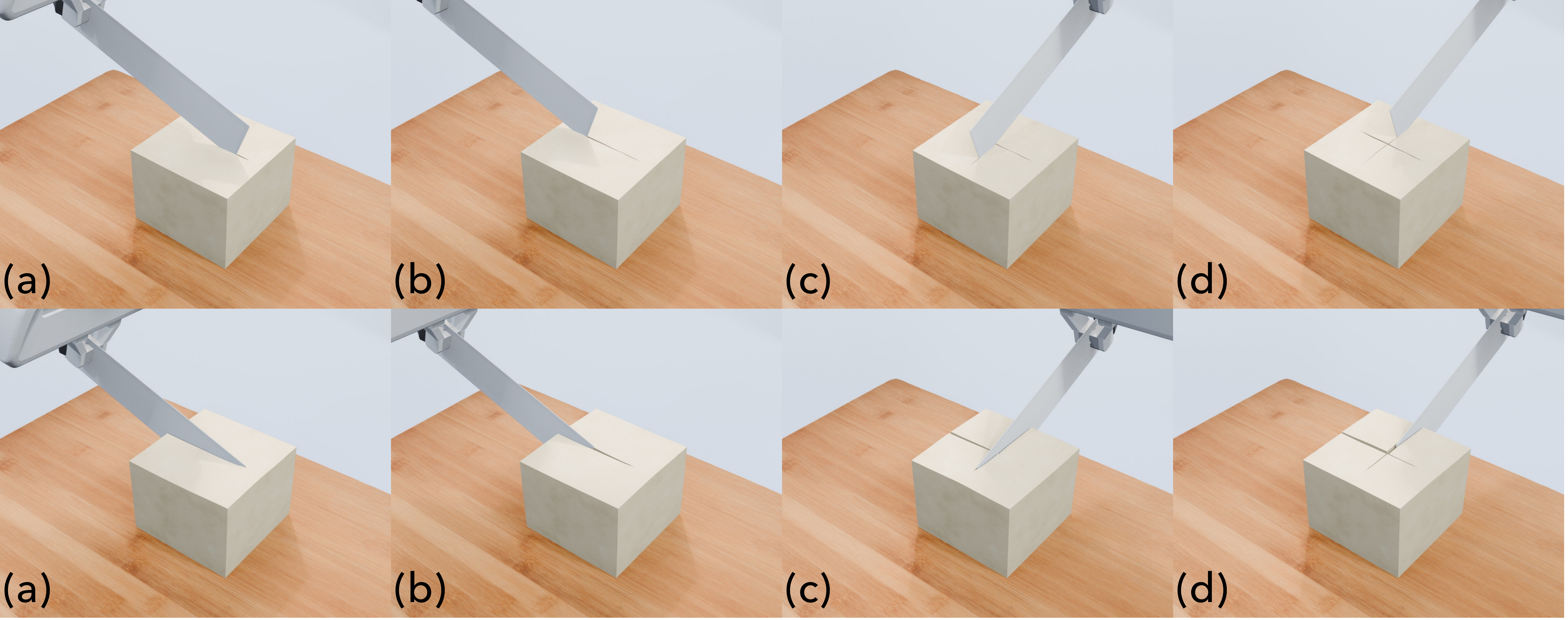}
  \caption{Intersecting partial cuts along a cross-shaped path at two penetration depths. Each row shows the cutting sequence from (a) to (d). The shallow pass (top) leaves faint surface scores, whereas the deeper pass (bottom) produces more pronounced incisions.}
  \label{fig:knife-tip-scoring}
\end{figure}

We further demonstrate sequential slicing with a Franka arm (Fig.~\ref{fig:sequential-slicing}). The block uses the same material parameters as the moving-knife example. During each cut, the opening develops above the blade edge while the uncut material below remains connected. As successive cuts are completed, the released slices topple and contact the board and previously cut slices, forming an overlapping pile. This sequence demonstrates progressive cutting, persistent separation, and contact among the resulting pieces within a single task.

\begin{figure}[hbt]
  \centering
  \includegraphics[width=\columnwidth]{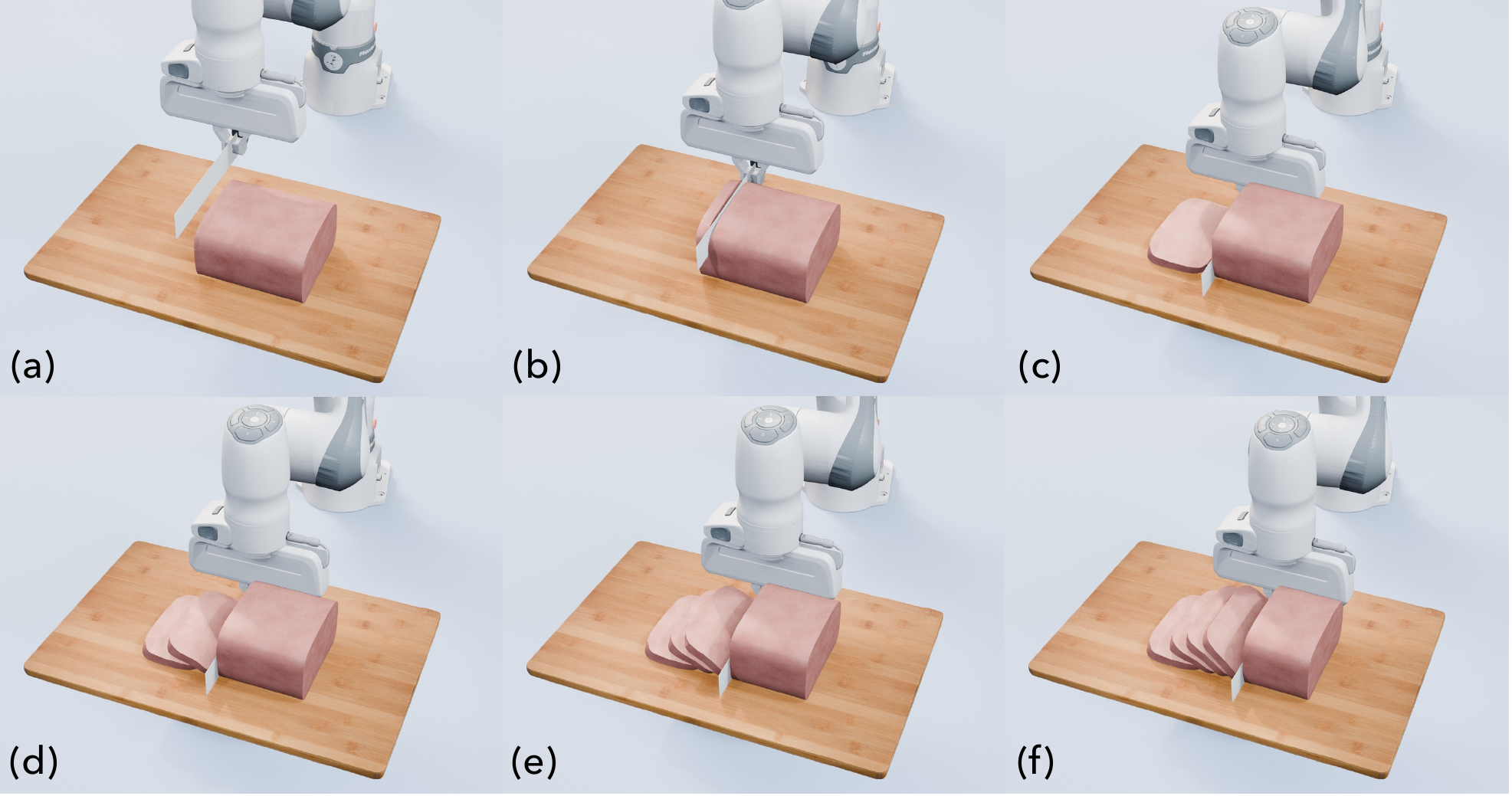}
  \caption{Sequential slicing in simulation. (a) Initial configuration. (b) Halfway through the first cut, the upper portion has separated while the lower portion remains connected. (c)--(f) Successive cuts produce thin slices that topple under gravity and accumulate on the cutting board.}
  \label{fig:sequential-slicing}
\end{figure}

\subsection{Real-world experiment \& Cutting Resistance Evaluation}

We quantitatively compare the cutting resistance computed by our method with estimates from a real robot's joint-torque measurements on a single vertical cut through a banana. We model the banana tissue using Young's modulus $E=0.4\,\mathrm{MPa}$, Poisson's ratio $\nu=0.45$, density $\rho=1100\,\mathrm{kg\,m^{-3}}$, and yield stress $30\,\mathrm{kPa}$. The simulated knife has a mass of $0.1\,\mathrm{kg}$ and a friction coefficient of $0.2$. We set $G_c=700\,\mathrm{J\,m^{-2}}$ and $\zeta=0.9$. We visualize the computed and measured resistance-force profiles (Fig.~\ref{fig:cut-reaction}). We additionally report an averaged trace over 20 simulations with different particle samplings and a refined, higher-resolution simulation. We observe that the simulated and experimental resistance forces are comparable in magnitude during cutting before knife--support contact. The averaged and refined traces exhibit fewer fluctuations and more closely match the experimental measurements.

We qualitatively compare our simulation with a real-world banana-cutting experiment (Fig.~\ref{fig:banana-teaser}).
The same end-effector trajectory is executed in both the simulation and the real-world experiment. After each cut, the knife pushes the resulting pieces aside. Across the illustrated stages, the simulation closely matches the experiment in the separation and displacement of the pieces. This task demonstrates interleaved cutting and manipulation while preserving the separation created by earlier cuts.

\begin{figure}[hbt]
  \centering
  \includegraphics[width=0.95\columnwidth]{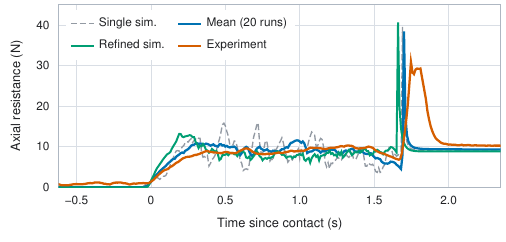}
  \vspace{-4mm}
  \caption{Axial resistance during banana cutting.  Simulation curves show a single run, the pointwise mean over 20 runs with different particle samplings at the same resolution, and a refined run with one-quarter the grid spacing and $64\times$ as many particles.  Total resistance includes material and rigid-support reactions.  The late peaks correspond to knife contact with the underlying rigid support.}
  \label{fig:cut-reaction}
  \vspace{-0.6cm}
\end{figure}

\subsection{Runtime and Scalability}

We evaluate \textsc{BladeMaster} on an NVIDIA GeForce RTX 5090. The reported timings cover the simulation loop and exclude surface reconstruction and rendering. Table~\ref{tab:simulation-statistics} summarizes the discretization and runtime statistics of representative examples. All listed cutting examples run faster than real time. At a \(2\,\mathrm{ms}\) time step, Fig.~\ref{fig:runtime-grid-scaling} shows how runtime scales with the numbers of particles and cuts. Compact allocation stores only occupied topology states, requiring far fewer grid slots than the full \(3^K\) code space.

\begin{table}[hbt]
  \centering
  \makeatletter
  \long\def\@makecaption#1#2{%
    \noindent\parbox{\hsize}{\footnotesize #1: #2}\par
    \vskip 0.5\baselineskip}
  \makeatother
  \caption{\textbf{Simulation statistics and runtime.} Wall/sim.\ denotes the ratio of wall-clock time to simulated time, with values below 1 indicating faster-than-real-time execution.}
  \label{tab:simulation-statistics}
  \footnotesize
  \begin{tabular*}{\columnwidth}{@{\extracolsep{\fill}}lcccc@{}}
    \toprule
    Example & \# Particles & \(\Delta t\) [ms] & \# Cuts & Wall/sim. \\
    \midrule
    Gravity-driven cutting & 8960 & 2 & 1 & 0.33 \\
    Slice lifting & 8960 & 2 & 2 & 0.36 \\
    Moving knives & 7922 & 1 & 3 & 0.62 \\
    Knife-tip scoring & 3584 & 2 & 2 & 0.30 \\
    Sequential slicing & 7922 & 1 & 4 & 0.56 \\
    \bottomrule
  \end{tabular*}
  \vspace{-4mm}
\end{table}

\begin{figure}[hbt]
  \centering
  \begin{minipage}[t]{0.45\columnwidth}
    \centering
    \includegraphics[width=\linewidth]{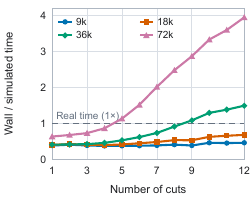}
    \par\smallskip
    \vspace{-2mm}
    {\footnotesize (a)}
  \end{minipage}\hfill
  \begin{minipage}[t]{0.45\columnwidth}
    \centering
    \includegraphics[width=\linewidth]{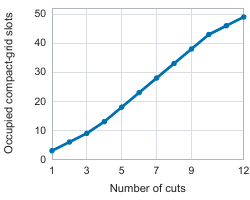}
    \par\smallskip
    \vspace{-2mm}
    {\footnotesize (b)}
  \end{minipage}
  \caption{Runtime and compact-grid usage for parallel cuts.  (a) Ratio of wall-clock time to simulated time for four particle counts.  The dashed line marks real-time execution (\(1\times\)).  (b) Number of occupied compact-grid slots at each cut count.}
  \label{fig:runtime-grid-scaling}
\end{figure}

\section{Conclusion}
\label{sec:conclusion}

We introduced \textsc{BladeMaster}, a GPU-accelerated TLMPM framework that generates progressive cuts from evolving blade motion without predefined cut surfaces. Persistent particle side labels maintain coupling within intact material while preventing direct grid coupling between opposing cut faces after tool withdrawal. Material--material contact allows cut faces to recontact without reconnecting, while two-way tool--material coupling allows material reaction forces to influence tool motion and cut geometry. Simulation experiments demonstrate cutting and subsequent manipulation without artificial grid coupling across completed cuts, with faster-than-real-time performance on representative tasks. A real robot experiment shows qualitatively similar separation and manipulation of the cut pieces.

Several limitations remain. Tool--material contact uses spherical particle proxies and grid-scattered impulses but does not strictly guarantee nonpenetration. The model introduces topological separation only along blade-induced cuts and therefore does not capture stress-driven crack initiation, crack propagation, or failure under non-cutting loads. Compact allocation avoids instantiating all $3^K$ states, but the number of active states may still grow with the number and arrangement of cuts. Despite these limitations, by combining online cut generation, persistent separation, and real-time performance, \textsc{BladeMaster} represents an important step toward efficient policy learning and motion optimization for robotic cutting tasks.

\appendix[Momentum Conservation]
\label{app:linear-momentum}

\begin{proposition}
\label{prop:momentum-conservation}
The coupling in Eq.~\eqref{eq:method-compatible-coupling} preserves momentum in the PIC/FLIP transfer and MUSL re-scatter of~\cite{deVaucorbeil2020TLMPM} for any common blend ratio $\beta\in[0,1]$.  This assumes fixed particle masses $m_p$ and cut-side codes $c_p$ within a substep, with $\sum_iN_{ip}=1$.
\end{proposition}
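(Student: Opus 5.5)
The plan is to show that every stage of the transfer pipeline preserves total linear momentum: P2G, compatibility coupling, grid update, G2P with blend $\beta$, and MUSL re-scatter with coupling. Chaining these equalities gives the result.

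\emph{Step 1: the coupling conserves momentum.} Fix a node $i$. Summing Eq.~\eqref{eq:method-compatible-coupling} over $a$ gives
\begin{equation*}
\sum_a \bar{\mathcal{S}}_i^a-\sum_a\mathcal{S}_i^a=\frac{1}{M_i}\sum_a\sum_{b\ne a,\ \neg\mathrm{conflict}(a,b)}\bigl(m_i^a\mathcal{S}_i^b-m_i^b\mathcal{S}_i^a\bigr).
\end{equation*}
The relation $\mathrm{conflict}$ is symmetric, so the index set of ordered compatible pairs is invariant under $a\leftrightarrow b$. The summand is antisymmetric under this swap, so the double sum vanishes. Applying this componentwise to $\mathbf{p}$ and $\Delta\mathbf{p}$ yields $\sum_a\bar{\mathbf{p}}_i^a=\sum_a\mathbf{p}_i^a$ and $\sum_a\overline{\Delta\mathbf{p}}_i^a=\sum_a\Delta\mathbf{p}_i^a$. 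The masses $m_i^a$ are unchanged by definition.

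\emph{Step 2: P2G and the grid update.} Using $\sum_iN_{ip}=1$ and the fact that each particle scatters only to its own grid $c_p$, we get $\sum_{i,c}\mathbf{p}_i^{n,c}=\sum_p m_p\mathbf{v}_p^n$. The internal impulse sums to $-\Delta t\sum_pV_p^0\mathbf{P}_p^n\sum_i\nabla_0N_{ip}=0$, since partition of unity implies $\sum_i\nabla_0N_{ip}=0$. Pairwise contact impulses cancel by Eq.~\eqref{eq:method-pair-scatter}. What remains is external and tool impulses, whose reactions are returned to the tool by Eq.~\eqref{eq:method-rigid-reaction}. The statement therefore means that the total momentum changes only by external and tool impulses, which I would make explicit. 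By Eq.~\eqref{eq:method-coupled-grid-update}, $\sum_{i,a}m_i^a\mathbf{v}_i^{n+1,a}=\sum_{i,a}(\bar{\mathbf{p}}_i^a+\overline{\Delta\mathbf{p}}_i^a)$, which by Step 1 equals the pre-coupling totals. Nodes with $m_i^a=0$ carry no particle contributions, so nothing is lost there.

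\emph{Step 3: G2P with blend $\beta$.} Write $\mathbf{v}_p^{n+1}=\beta\,\mathbf{v}^{\mathrm{PIC}}_p+(1-\beta)\,\mathbf{v}^{\mathrm{FLIP}}_p$, where
\begin{equation*}
\mathbf{v}^{\mathrm{PIC}}_p=\sum_iN_{ip}\mathbf{v}_i^{n+1,c_p},\qquad
\mathbf{v}^{\mathrm{FLIP}}_p=\mathbf{v}_p^n+\sum_iN_{ip}\bigl(\mathbf{v}_i^{n+1,c_p}-\mathbf{v}_i^{n,c_p}\bigr),
\end{equation*}
and the pre-update nodal velocity is $\mathbf{v}_i^{n,a}=\bar{\mathbf{p}}_i^{n,a}/m_i^a$.

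For the PIC part, $\sum_pm_p\sum_iN_{ip}\mathbf{v}_i^{n+1,c_p}=\sum_{i,a}m_i^a\mathbf{v}_i^{n+1,a}$. For the FLIP part, the same regrouping gives
\begin{equation*}
\sum_pm_p\mathbf{v}_p^n+\sum_{i,a}m_i^a\mathbf{v}_i^{n+1,a}-\sum_{i,a}\bar{\mathbf{p}}_i^{n,a},
\end{equation*}
and $\sum_{i,a}\bar{\mathbf{p}}_i^{n,a}=\sum_pm_p\mathbf{v}_p^n$ by Steps 1 and 2. Both parts therefore give the same total, and so does any convex combination with a common $\beta$.

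\emph{Step 4: MUSL re-scatter.} This repeats Steps 1 and 2 on the updated particle momenta with no impulses, which is why fixed $m_p$ and fixed $c_p$ within the substep are required.

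The main obstacle is Step 3, specifically the FLIP term. It conserves momentum only if the "old" nodal velocity subtracted is exactly the coupled $\bar{\mathbf{p}}_i^{n,a}/m_i^a$, not the uncoupled $\mathbf{p}_i^{n,a}/m_i^a$. It also requires the mass weights in G2P to match the per-grid P2G masses $m_i^a$, which depends on the coupling leaving $\bar m_i^a=m_i^a$. I would verify this regrouping carefully, since it is where the fixed codes and the unchanged logical-grid masses are actually used.
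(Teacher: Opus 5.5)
Your argument is correct and is essentially the paper's proof. Both use pairwise antisymmetric cancellation at each node, regroup the G2P sums through $m_i^a=\sum_{p:c_p=a}N_{ip}m_p$ to get a total momentum change of $\sum_{i,a}\Delta\mathbf{p}_i^a$, and apply the same cancellation to the MUSL re-scatter. Your separate PIC/FLIP treatment, your $\beta$ weighting PIC where the paper's $\beta$ weights FLIP, and your extra check that internal and pairwise contact impulses net to zero are harmless refinements.

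One correction to your closing remark: the FLIP term would also conserve momentum if the uncoupled $\mathbf{p}_i^{n,a}/m_i^a$ were subtracted. This is because $\sum_{i,a}\mathbf{p}_i^{n,a}=\sum_{i,a}\bar{\mathbf{p}}_i^{n,a}=\sum_p m_p\mathbf{v}_p^n$ either way. Subtracting the coupled value matches the paper's update, but conservation does not depend on it.
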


\begin{proof}
For each compatible pair $(a,b)$, the exchange in Eq.~\eqref{eq:method-compatible-coupling} is the negative of that for $(b,a)$.  Thus, at every occupied node, $\sum_a\bar{\mathcal{S}}_i^a=\sum_a\mathcal{S}_i^a$ and $\bar{m}_i^a=m_i^a$.
Combining G2P interpolation with the PIC/FLIP velocity update gives
\begin{equation}
  \mathbf{v}_p^{n+1}=\beta\mathbf{v}_p^n
  +\sum_i\frac{N_{ip}}{m_i^{c_p}}
  \left[(1-\beta)\bar{\mathbf{p}}_i^{n,c_p}
  +\overline{\Delta\mathbf{p}}_i^{c_p}\right].
  \label{eq:appendix-g2p}
\end{equation}
Since $\frac{\sum_{p:c_p=a}m_pN_{ip}}{m_i^a}=1$, writing $\mathbf{P}^n=\sum_pm_p\mathbf{v}_p^n$, we obtain
\begin{align}
  \mathbf{P}^{n+1}
  &=\beta\mathbf{P}^n+(1-\beta)\sum_{i,a}\bar{\mathbf{p}}_i^{n,a}
  +\sum_{i,a}\overline{\Delta\mathbf{p}}_i^a\nonumber\\
  &=\mathbf{P}^n+\sum_{i,a}\Delta\mathbf{p}_i^a.
  \label{eq:appendix-momentum-balance}
\end{align}
This holds for arbitrary impulses on occupied fields and every common $\beta$.

MUSL re-scatters $\sum_{p:c_p=a}N_{ip}m_p\mathbf{v}_p^{n+1}$ to each grid.  By the same argument, its total is $\mathbf{P}^{n+1}$, which the coupling preserves by the same pairwise cancellation.
\end{proof}

\bibliographystyle{IEEEtran}
\bibliography{references}

\end{document}